\def\adl@drawiv#1#2#3{
        \hskip.5\tabcolsep
        \xleaders#3{#2.5\@tempdimb #1{1}#2.5\@tempdimb}%
                #2\z@ plus1fil minus1fil\relax
        \hskip.5\tabcolsep}
\newcommand{\cdashlinelr}[1]{%
  \noalign{\vskip\aboverulesep
          \global\let\@dashdrawstore\adl@draw
          \global\let\ adl@draw\adl@drawiv}
  \cdashline{#1}
  \noalign{\global\let\adl@draw\@dashdrawstore
          \vskip\belowrulesep}}
\DeclareMathOperator*{\argmax}{arg\,max}
\DeclareMathOperator*{\argmin}{arg\,min}
\newtheorem{theorem}{Theorem}
\patchcmd{\maketitle}{\@copyrightpermission}{
  \begin{minipage}{0.3\columnwidth}
    \href{http://creativecommons.org/licenses/by/4.0/}{\includegraphics[width=0.90\textwidth]{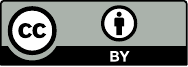}}
  \end{minipage}\hfill
  \begin{minipage}{0.7\columnwidth}
    \href{http://creativecommons.org/licenses/by/4.0/}{This work is licensed under a Creative Commons Attribution International 4.0 License.}
  \end{minipage}
  \vspace{5pt}
}{}{}
\begin{document}
\title{Optimal Baseline Corrections for Off-Policy Contextual Bandits}

\author{Shashank Gupta}
\affiliation{
  \institution{University of Amsterdam}
  \city{Amsterdam}
  \country{The Netherlands}
} 
\authornote{Equal contribution.}
\email{s.gupta2@uva.nl}

\author{Olivier Jeunen}
\affiliation{
  \institution{ShareChat}
  \city{Edinburgh}
  \country{United Kingdom}
}
\authornotemark[1]
\email{jeunen@sharechat.co}

\author{Harrie Oosterhuis}
\affiliation{
  \institution{Radboud University}
  \city{Nijmegen}
  \country{The Netherlands}
} 
\email{harrie.oosterhuis@ru.nl}

\author{Maarten de Rijke}
\affiliation{
  \institution{University of Amsterdam}
  \city{Amsterdam}
  \country{The Netherlands}
} 
\email{m.derijke@uva.nl}

\begin{abstract}
The off-policy learning paradigm allows for recommender systems and general ranking applications to be framed as \emph{decision-making} problems, where we aim to learn decision policies that optimize an unbiased \emph{offline} estimate of an \emph{online} reward metric.
With unbiasedness comes potentially high variance, and prevalent methods exist to reduce estimation variance.
These methods typically make use of control variates, either additive (i.e., baseline corrections or doubly robust methods) or multiplicative (i.e., self-normalisation). 

Our work unifies these approaches by proposing a single framework built on their equivalence in learning scenarios.
The foundation of our framework is the derivation of an equivalent baseline correction for all of the existing control variates.
Consequently, our framework enables us to characterize the variance-optimal unbiased estimator and provide a closed-form solution for it.
This optimal estimator brings significantly improved performance in both evaluation and learning, and minimizes data requirements.
Empirical observations corroborate our theoretical findings.
\end{abstract}

\begin{CCSXML}
<ccs2012>
<concept>
<concept_id>10002951.10003317.10003347.10003350</concept_id>
<concept_desc>Information systems~Recommender systems</concept_desc>
<concept_significance>500</concept_significance>
</concept>
</ccs2012>
\end{CCSXML}

\ccsdesc[500]{Information systems~Recommender systems}

\keywords{Contextual Bandits; Recommender Systems; Off-policy Learning}

\maketitle


\section{Introduction \& Motivation}
Recommender systems have undergone a paradigm shift in the last few decades, moving their focus from \emph{rating} prediction in the days of the Netflix Prize~\cite{Bennet2007}, to \emph{item} prediction from implicit feedback~\cite{Rendle2022} and \emph{ranking} applications gaining practical importance~\cite{Steck2013,Jeunen2023_nDCG}.
Recently, work that applies ideas from the algorithmic \emph{decision-making} literature to recommendation problems has become more prominent~\cite{Vasile2020,Saito2021,Gupta2024, jeunen2022consequences}.
While this line of research is not inherently new~\cite{Shani2002,Li2010}, methods based on contextual bandits (or reinforcement learning by extension) have now become widespread in the recommendation field~\cite{McInerney2018,Mehrotra2020,Bendada2020, Jeunen2021_TopK, Yi2023, Su2024, Briand2024}.
The \emph{off-policy} setting is particularly attractive for practitioners~\cite{vandenAkker2024}, as it allows models to be trained and evaluated in an offline manner~\cite{chen2019top, Dong2020, ma2020off, Jeunen2020, Jeunen2021_Pessimism, Chen2021, Liu2022, chen2022actorcritic, Jeunen2023,gupta2023safe,gupta2023deep,gupta-2024-practical,gupta-2023-first-abstract}.
Indeed, methods exist to obtain unbiased \emph{offline} estimators of \emph{online} reward metrics, which can then be optimized directly~\cite{Jeunen2021_Thesis}.

Research at the forefront of this area typically aims to find Pareto-optimal solutions to the bias-variance trade-off that arises when choosing an estimator: reducing variance by accepting a small bias~\cite{Ionides2008,su2020doubly}, by introducing control variates~\cite{Dudik2014,Swaminathan2015}, or both~\cite{Su2019}.
Control variates are especially attractive as they (asymptotically) preserve the unbiasedness of the widespread inverse propensity scoring (IPS) estimator.
Additive control variates give rise to baseline corrections~\cite{Greensmith2004}, regression adjustments~\cite{Freedman2008}, and doubly robust estimators~\cite{Dudik2014}.
Multiplicative control variates lead to self-normalised estimators~\cite{Kong1992,Swaminathan2015}.
Previous work has proven that for off-policy \emph{learning} tasks, the multiplicative control variates can be re-framed using an equivalent additive variate~\cite{Joachims2018,Budylin2018}, enabling mini-batch optimization methods to be used.
We note that the self-normalised estimator is only \emph{asymptotically} unbiased: a clear disadvantage for evaluation with finite samples.
The common problem which most existing methods tackle is that of \emph{variance reduction} in offline value estimation, either for learning or for evaluation.
The common solution is the application of a control variate, either multiplicative or additive~\cite{Owen2013}.
However, to the best of our knowledge, there is no  work that attempts to unify these methods.
Our work addresses this gap by presenting these methods in a unifying framework of baseline corrections which, in turn, allows us to find the optimal baseline correction for variance reduction.

In the context of off-policy learning, adding to the well-known equivalence between reward-translation and self-normalisation described by \citet{Joachims2018}, we demonstrate that the equivalence extends to baseline corrections, regression adjustments, and doubly robust estimators with a constant reward model.
Further, we derive a novel baseline correction method for off-policy learning that minimizes the variance of the gradient of the (unbiased) estimator. We further show that the baseline correction can be estimated in a closed-form fashion, allowing for easy practical implementation.

In line with recent work on off-policy evaluation/learning for recommendation~\cite{Jeunen2021_TopK,Jeunen2023_AuctionGym,Jeunen2021_Pessimism,rohde2018recogym,Saito2021_OPE}, we adopt an off-policy simulation environment to emulate real-world recommendation scenarios, such as stochastic rewards, large action spaces, and controlled randomisation.
This choice also encourages future reproducibility~\cite{saito2020open}.
Our experimental results indicate that our proposed baseline correction for gradient variance reduction enables substantially faster convergence and lower gradient variance during learning.

In addition, we derive a closed-form solution to the optimal baseline correction for off-policy evaluation, i.e., the one that minimizes the variance of the estimator itself. 
Importantly, since our framework only considers unbiased estimators, the variance-optimality implies overall optimality.
Our experimental results show that this leads to lower errors in policy value estimation than widely used doubly-robust and SNIPS estimators~\cite{Dudik2014, Swaminathan2015}.

All source code to reproduce our experimental results is available at: \url{https://github.com/shashankg7/recsys2024_optimal_baseline}.





\section{Background and Related Work}\label{sec:sec1}
The goal of this section is to introduce common contextual bandit setups for recommendation, both on-policy and off-policy.

\subsection{On-policy contextual bandits}
\label{sec:general}

We address a general contextual bandit setup~\cite{saito2022counterfactual,joachims2016counterfactual} with contexts $X$, actions $A$, and rewards $R$.
The context typically describes \emph{user} features, actions are the \emph{items} to recommend, and rewards can be any type of \emph{interaction} logged by the platform.
A policy $\pi$ defines a conditional probability distribution over actions $x$: $\mathsf{P}(A=a\mid X=x,\Pi=\pi) \equiv \pi(a \mid x)$.
Its \emph{value} is the expected reward it yields:
\begin{equation}\label{eq:onpolicy_reward}
    V(\pi) = \mathop{\mathbb{E}}_{x\sim\mathsf{P}(X)}\Big[\mathop{\mathbb{E}}_{a \sim \pi(\cdot \mid x)}\mleft[ R \mright]\Big].
\end{equation}
When the policy $\pi$ is deployed, we can estimate this quantity by averaging the rewards we observe.
We denote the expected reward for action $a$ and context $r$ as $r(a,x) \coloneqq \mathbb{E}[R \mid X=x;A=a]$.

In the field of contextual bandits (and reinforcement learning (RL) by extension), one often wants to learn $\pi$ to maximise $V(\pi)$~\cite{sutton2018reinforcement,lattimore2020bandit}.
This is typically achieved through gradient ascent.
Assuming $\pi_{\theta}$ is parameterised by $\theta$, we iteratively update with learning rate $\eta$:
\begin{equation}
    \theta_{t+1} = \theta_{t} + \eta \nabla_{\theta}(V(\pi_{\theta})).
\end{equation}
Using the well-known REINFORCE ``log-trick''~\cite{Williams1992}, the above gradient can be formulated as an expectation over sampled actions, whereby tractable Monte Carlo estimation is made possible:
\begin{align}
    \nabla_{\theta}(V(\pi_{\theta})) &=  \nabla_{\theta}\mleft( \mathop{\mathbb{E}}_{x\sim\mathsf{P}(X)}\mleft[\mathop{\mathbb{E}}_{a \sim \pi_{\theta}(\cdot\mid x)}\mleft[ R \mright]\mright] \mright) \nonumber \\
    &= \nabla_{\theta}\mleft( \int\sum_{a \in \mathcal{A}} \pi_{\theta}(a\mid x) r(a,x) \mathsf{P}(X=x) \rm{d}x   \mright) \nonumber \\
    &=  \int\sum_{a \in \mathcal{A}} \nabla_{\theta}\mleft( \pi_{\theta}(a\mid x) r(a,x) \mright)\mathsf{P}(X=x)\rm{d}x \label{eq:mc_sample}   \\
    &=  \int\sum_{a \in \mathcal{A}} \pi_{\theta}(a|x) \nabla_{\theta}\mleft( \log(\pi_{\theta}(a|x)) r(a,x) \mright) \mathsf{P}(X=x)\rm{d}x \nonumber \\
    &=  \mathop{\mathbb{E}}_{x\sim\mathsf{P}(X)}\mleft[\mathop{\mathbb{E}}_{a \sim \pi_{\theta}(\cdot\mid x)}\mleft[ \nabla_{\theta}\mleft( \log(\pi_{\theta}(a\mid x)) R \mright) \mright]\mright]. \nonumber
\end{align}
This provides an unbiased estimate of the gradient of $V(\pi_{\theta})$. However, it may be subject to high variance due to the inherent variance of $R$.
Several techniques have been proposed in the literature that aim to alleviate this, mostly using additive \emph{control variates}.

Control variates are random variables with a known expectation~\cite[\S 8.9]{Owen2013}.
If the control variate is correlated with the original estimand --- in our case $V(\pi_{\theta})$ --- they can be used to reduce the estimator's variance.
A natural way to apply control variates to a sample average estimate for Eq.~\ref{eq:onpolicy_reward} is to estimate a model of the reward $\widehat{r}(a,x)\approx \mathbb{E}[R|X=x;A=a]$ and subtract it from the observed rewards~\cite{Freedman2008}.
This is at the heart of key RL techniques (i.a., generalised advantage estimation~\cite{Schulmanetal_ICLR2016}), and it underpins widely used methods to increase sensitivity in online controlled experiments~\cite{Deng2013,Poyarkov2016,Budylin2018,Baweja2024}.
As such, it applies to both \emph{evaluation} and \emph{learning} tasks.
We note that if the model $\widehat{r}(a,x)$ is biased, this bias propagates to the resulting estimator for $V(\pi_{\theta})$.

Alternatively, instead of focusing on reducing the variance of $ V(\pi_{\theta})$ directly, other often-used approaches tackle the variance of its gradient estimates $\nabla_{\theta}(V(\pi_{\theta}))$ instead.

Observe that $\mathop{\mathbb{E}}_{a \sim \pi_{\theta}(\cdot|x)}\mleft[ \nabla_{\theta}\mleft( \log(\pi_{\theta}(a\mid x))\mright) \mright]=0$~\cite[Eq. 12]{Mohamed2020}.
This implies that a translation on the rewards in Eq.~\ref{eq:mc_sample} does not affect the unbiasedness of the gradient estimate.
Nevertheless, as such a translation can be framed as an additive control variate, it will affect its variance.
Indeed, ``\emph{baseline corrections}'' are a well-known variance reduction method for on-policy RL methods~\cite{Greensmith2004}.
For a dataset consisting of logged contexts, actions and rewards $\mathcal{D} = \{(x_i,a_i,r_i)_{i=1}^{N}\}$, we apply a \emph{baseline} control variate $\beta$ to the estimate of the final gradient to obtain:
\begin{equation}
\begin{split}
    \nabla_{\theta}(V(\pi_{\theta}))
    &\approx
    \widehat{\nabla_{\theta}(V_{\beta}(\pi_{\theta}))} 
    \\
    &=
    \frac{1}{\mleft|\mathcal{D}\mright|}
    \sum_{(x,a,r) \in \mathcal{D}}   (r-\beta) \nabla_{\theta} \log \pi_{\theta}(a \mid x).
\end{split}    
    \label{eq:onpolicy_grad}
\end{equation}
\citet{Williams1988} originally proposed to use the average observed reward for $\beta$. Subsequent work has derived optimal baselines for general on-policy RL scenarios~\cite{Dayan1991, Greensmith2004}.
However, to the best of our knowledge, \emph{optimal baselines for on-policy contextual bandits have not been considered in previous work}.

\noindent
\textbf{Optimal baseline for on-policy bandits.}
The optimal baseline $\beta$ for the on-policy gradient estimate in Eq.~\ref{eq:onpolicy_grad} is the one that minimizes the variance of the gradient estimate.
In accordance with earlier work~\cite{Greensmith2004},
we define the variance of a vector random variable as the sum of the variance of its individual components.
Therefore, the optimal baseline is given by:
%
\begin{align}
    & \argmin_\beta \mathrm{Var} \mleft( \widehat{\nabla_{\theta}(V_{\beta}(\pi_{\theta}))} \mright) \nonumber \\
    &= \argmin_\beta  \frac{1}{\mleft|\mathcal{D}\mright|} \mathop{\mathrm{Var}}\mleft[ \nabla_{\theta}\mleft( \log(\pi_{\theta}(a|x)) \mleft( r - \beta \mright) \mright) \mright]  \\
    &= \argmin_\beta \frac{1}{\mleft|\mathcal{D}\mright|} \mathop{\mathbb{E}}\mleft[  \nabla_{\theta} \log(\pi_{\theta}(a|x))^{\top} \nabla_{\theta} \log(\pi_{\theta}(a|x))  \mleft( r - \beta \mright)^2 \mright] \hspace{-1em} \label{eq:step2} \\
    & \quad - \frac{1}{\mleft|\mathcal{D}\mright|}  \mathop{\mathbb{E}}\mleft[  \nabla_{\theta} \log(\pi_{\theta}(a|x)) \mleft( r - \beta \mright) \ \mright]^{\top} \mathop{\mathbb{E}}\mleft[  \nabla_{\theta} \log(\pi_{\theta}(a|x)) \mleft( r - \beta \mright) \mright] \nonumber  \\
    &= \argmin_\beta  \frac{1}{\mleft|\mathcal{D}\mright|} \mathop{\mathbb{E}}\mleft[ \| \nabla_{\theta} \log(\pi_{\theta}(a|x))\|^2_2 \mleft( r -\beta \mright)^2  \mright],
    \label{eq:onpolicy_optimal_baseline}
\end{align}
where we ignore the second term in Eq.~\ref{eq:step2}, since it is independent of $\beta$~\cite[Eq. 12]{Mohamed2020}.
The result from this derivation (Eq.~\ref{eq:onpolicy_optimal_baseline}) reveals that the optimal baseline can be obtained by solving the following equation:
\begin{align}
    \mbox{}\hspace*{-2mm}
    \frac{\partial \mathrm{Var}\big(\widehat{\nabla_{\theta}(V_{\beta}(\pi_{\theta}))} \big)}{\partial \beta} &= 
    \frac{2}{\mleft|\mathcal{D}\mright|} \mathop{\mathbb{E}}\mleft[  \| \nabla_{\theta} \log(\pi_{\theta}(a \mid x))\|^2_2 \mleft( \beta -  r  \mright)  \mright] \!= 0,
    \hspace*{-1mm}\mbox{}
\end{align}
which results in the following optimal baseline correction:
\begin{equation}
    \beta^{*} =  \frac{\mathop{\mathbb{E}}\mleft[  \| \nabla_{\theta} \log(\pi_{\theta}(a|x))\|^2_2 r(a,x) \mright]}{\mathop{\mathbb{E}}\mleft[  \| \nabla_{\theta} \log(\pi_{\theta}(a|x))\|^2_2 \mright]},
\end{equation}
and the empirical estimate of the optimal baseline correction:
\begin{equation}
    \widehat{\beta^{*}} =  \frac{\sum_{(x,a,r) \in \mathcal{D}} \mleft[  \| \nabla_{\theta} \log(\pi_{\theta}(a|x))\|^2_2 r(a,x) \mright]}{\sum_{(x,a,r) \in \mathcal{D}}\mleft[  \| \nabla_{\theta} \log(\pi_{\theta}(a|x))\|^2_2 \mright]}.
\end{equation}
This derivation follows the more general derivation from \citet{Greensmith2004} for partially observable Markov decision processes (POMDPs). We have not encountered its use in the existing bandit literature applied to recommendation problems. In Section~\ref{sec:grad_var}, we show that a similar line of reasoning can be applied to derive a variance-optimal gradient for the off-policy contextual bandit setup.

\subsection{Off-policy estimation for general bandits}
\label{sec:general_OPE}

Deploying $\pi$ is a costly prerequisite for estimating $V(\pi)$, that comes with the risk of deploying a possible poorly valued $\pi$.
Therefore, commonly in real-world model validation pipelines, practitioners wish to estimate $V(\pi)$ \emph{before} deployment.
Accordingly, we will address this \emph{counterfactual} evaluation scenario that falls inside the field of off-policy estimation (OPE)~\cite{Saito2021_OPE,Vasile2020}. 

The expectation $V(\pi)$ can be unbiasedly estimated using samples from a \emph{different} policy $\pi_{0}$ through \emph{importance sampling}, also known as inverse propensity score weighting (IPS)~\cite[\S 9]{Owen2013}:
\begin{equation}\label{eq:imp_sampl}
    \mathop{\mathbb{E}}_{x\sim\mathsf{P}(X)}\mleft[\mathop{\mathbb{E}}_{a \sim \pi(\cdot|x)}\mleft[ R \mright]\mright]
    = \mathop{\mathbb{E}}_{x\sim\mathsf{P}(X)}\mleft[\mathop{\mathbb{E}}_{a \sim \pi_{0}(\cdot|x)}\mleft[ \frac{\pi(a\mid x)}{\pi_{0}(a\mid x)} R \mright]\mright].
\end{equation}
To ensure that the so-called \emph{importance weights} $\frac{\pi(a\mid x)}{\pi_{0}(a\mid x)}$ are well-defined, we assume ``\emph{common support}'' by the logging policy: $\forall a \in \mathcal{A}, x \in \mathcal{X}: \pi(a\mid x) > 0 \implies \pi_{0}(a\mid x) > 0$.

From Eq.~\ref{eq:imp_sampl}, we can derive an unbiased estimator for $V(\pi)$ using contexts, actions and rewards logged under $\pi_{0}$, denoted by $\mathcal{D}$:
\begin{equation}\label{eq:IPS}
    \widehat{V}_{\rm IPS}(\pi,\mathcal{D}) = \frac{1}{\mleft|\mathcal{D}\mright|} \sum_{(x,a,r) \in \mathcal{D}} \frac{\pi(a\mid x)}{\pi_{0}(a\mid x)} r.
\end{equation}
To keep our notation brief, we suppress subscripts when they are clear from the context.
In the context of gradient-based optimization methods, we often refer to a minibatch $\mathcal{B} \subset \mathcal{D}$ instead of the whole dataset, as is typical for, e.g., stochastic gradient descent (SGD).

If we wish to learn a policy that maximises this estimator, we need to estimate its gradient for a batch $\mathcal{B}$.
\if 0
Commonly, this is done through the same REINFORCE estimator described above~\cite{chen2019top,ma2020off,chen2022actorcritic}:
\begin{equation}
    \nabla \widehat{V}_{\rm IPS}(\pi,\mathcal{B}) = \frac{1}{\mleft|\mathcal{B}\mright|} \sum_{(x,a,r) \in \mathcal{B}} \frac{\pi(a\mid x)}{\pi_{0}(a\mid x)} r \nabla \log \pi(a\mid x).
\end{equation}
Note, however, that the application of the REINFORCE log-trick to maximise $\widehat{V}_{\rm IPS}$ is entirely superfluous.
Indeed, we are optimising the parameters of the target policy $\pi_{\theta}$, but the expectation is over samples of the logging policy $\pi_{0}$.
As such, we simply have:
\begin{align}
    \nabla(V(\pi)) &=  \nabla\mleft( \mathop{\mathbb{E}}_{x\sim\mathsf{P}(X)}\mleft[\mathop{\mathbb{E}}_{a \sim \pi(\cdot|x)}\mleft[ R \mright]\mright] \mright)\nonumber\\ 
    &=  \nabla\mleft( \mathop{\mathbb{E}}_{x\sim\mathsf{P}(X)}\mleft[\mathop{\mathbb{E}}_{a \sim \pi_{0}(\cdot|x)}\mleft[  \frac{\pi(a|x)}{\pi_{0}(a|x)}  R \mright]\mright] \mright)\nonumber\\    
    &=  \nabla\mleft(\int\sum_{a \in \mathcal{A}} \pi_{0}(a|x) \frac{\pi(a|x)}{\pi_{0}(a|x)} r(a,x)\mathsf{P}(X=x)\rm{d}x \mright) \nonumber\\
   &=  \int\sum_{a \in \mathcal{A}} \pi_{0}(a|x) \frac{\nabla\pi(a|x) }{\pi_{0}(a|x)} r(a,x) \mathsf{P}(X=x)\rm{d}x\nonumber\\ 
    &=  \mathop{\mathbb{E}}_{x\sim\mathsf{P}(X)}\mleft[\mathop{\mathbb{E}}_{a \sim \pi_{0}(\cdot|x)}\mleft[  \frac{ \nabla\pi(a|x)}{\pi_{0}(a|x)}  R  \mright] \mright]. \label{eq:offpolicy_grad}
\end{align}
And a less convoluted Monte Carlo estimate for the gradient:
\fi
Whilst some previous work has applied a REINFORCE estimator~\cite{chen2019top,ma2020off,chen2022actorcritic}, we use a straightforward Monte Carlo estimate for the gradient:
\begin{equation}\label{eq:MC_gradient_IPS}
    \nabla \widehat{V}_{\rm IPS}(\pi, \mathcal{B}) = \frac{1}{\mleft|\mathcal{B}\mright|} \sum_{(x,a,r) \in \mathcal{B}} \frac{\nabla\pi(a|x)}{\pi_{0}(a|x)} r.
\end{equation}
Importance sampling --- the bread and butter of unbiased off-policy estimation --- often leads to increased variance compared to on-policy estimators.
Several variance reduction techniques have been proposed specifically to combat the excessive variance of $\widehat{V}_{\rm IPS}$~\cite{Ionides2008,Dudik2014,Swaminathan2015}.
Within the scope of this work, we only consider techniques that reduce variance \emph{without} introducing bias.

\noindent
\textbf{Self-normalised importance sampling.}
The key idea behind \emph{self-normalisation}~\cite[\S 9.2]{Owen2013} is to use a \emph{multiplicative} control variate to rescale $\widehat{V}_{\rm IPS}(\pi,\mathcal{D})$.
An important observation for this approach is that for any policy $\pi$ and a dataset $\mathcal{D}$ logged under $\pi_{0}$, the expected average of importance weights should equal 1~\cite[\S 5]{Swaminathan2015}:
\begin{equation}\label{eq:control_variate}
   \mathop{\mathbb{E}}_{\mathcal{D} \sim \mathsf{P}(\mathcal{D})}\mleft[
   \frac{1}{\mleft|\mathcal{D}\mright|} \sum_{(x,a,r) \in \mathcal{D}}\frac{\pi(a\mid x)}{\pi_{0}(a\mid x)}
   \mright] = 1.
\end{equation}
Furthermore, as this random variable (Eq.~\ref{eq:control_variate}) is likely to be correlated with the IPS estimates, we can expect that its use as a control variate will lead to reduced variance (see~\cite[e.g.,][]{Kong1992}).
This gives rise to the asymptotically unbiased and parameter-free self-normalised IPS (SNIPS) estimator, with $ S \coloneqq  \frac{1}{D} \sum_{(x,a,r) \in \mathcal{D}} \frac{\pi(a|x)}{\pi_{0}(a|x)}$ as its normalization term:
\begin{equation}\label{eq:SNIPS}
    \widehat{V}_{\rm SNIPS}(\pi,\mathcal{D}) =  \frac{\sum_{(x,a,r) \in \mathcal{D}} \frac{\pi(a|x)}{\pi_{0}(a|x)} r}{\sum_{(x,a,r) \in \mathcal{D}} \frac{\pi(a|x)}{\pi_{0}(a|x)}}
    =
    \frac{\widehat{V}_{\rm IPS}(\pi,\mathcal{D})}{S}.
\end{equation}
Given the properties of being asymptotically unbiased and para\-meter-free, this estimator is often a go-to method for off-policy \emph{evaluation} use-cases~\cite{Saito2021_OPE}.
An additional advantage is that the SNIPS estimator is invariant to translations in the reward, which cannot be said for $\widehat{V}_{\rm IPS}$.
Whilst the formulation in Eq.~\ref{eq:SNIPS} is not obvious in this regard, it becomes clear when we consider its gradient:
%
\begin{align}\label{eq:SNIPS_gradient}
    &\nabla \widehat{V}_{\rm SNIPS}(\pi,\mathcal{D}) =  \nabla\mleft(\frac{\sum_{(x,a,r)} \frac{\pi(a|x)}{\pi_{0}(a|x)} r}{\sum_{(x,a)} \frac{\pi(a|x)}{\pi_{0}(a|x)}}\mright)  \nonumber \\ 
 &\qquad\qquad\qquad =\frac{\mleft( \sum_{(x,a,r)} \frac{\nabla \pi(a|x)}{\pi_{0}(a|x)} r  \mright) \mleft( \sum_{(x,a)} \frac{ \pi(a|x)}{\pi_{0}(a|x)}  \mright) }{\mleft({\sum_{(x,a)} \frac{ \pi(a|x)}{\pi_{0}(a|x)}}\mright)^2}  \nonumber \\
 & \quad  - \frac{\mleft( \sum_{(x,a,r)} \frac{ \pi(a|x)}{\pi_{0}(a|x)} r  \mright) \mleft( \sum_{(x,a)} \frac{\nabla \pi(a|x)}{\pi_{0}(a|x)}  \mright) }{\mleft({\sum_{(x,a)} \frac{  \pi(a|x)}{\pi_{0}(a|x)}}\mright)^2} \\
 &= \frac{\sum_{(x_{i},a_{i}, r_{i})} \sum_{(x_{j},a_{j}, r_{j})}\frac{\pi(a_{i}|x_{i})\nabla\pi(a_{j}|x_{j})}{\pi_{0}(a_{i}|x_{i})\pi_{0}(a_{j}|x_{j})}(r_{j}-r_{i}) }{\mleft({\sum_{(x,a)} \frac{ \pi(a|x)}{\pi_{0}(a|x)}}\mright)^2}\nonumber  \\ 
 &= \frac{\sum\limits_{(x_{i},a_{i}, r_{i})} \sum\limits_{(x_{j},a_{j}, r_{j})}\frac{\pi(a_{i}|x_{i})\pi(a_{j}|x_{j})}{\pi_{0}(a_{i}|x_{i})\pi_{0}(a_{j}|x_{j})} \nabla\log\pi(a_{j}|x_{j})(r_{j}-r_{i}) }{\mleft({\sum_{(x,a)} \frac{ \pi(a|x)}{\pi_{0}(a|x)}}\mright)^2}.\nonumber
\end{align}
Indeed, as the SNIPS gradient relies on the \emph{relative difference} in observed reward between two samples, a constant correction would not affect it (i.e., if $\overline{r} = r - \beta$, then $r_j-r_i \equiv \overline{r}_{j}-\overline{r}_{i}$).

\citet{Swaminathan2015} effectively apply the SNIPS estimator (with a variance regularisation term~\cite{Swaminathan2015_BLBF}) to off-policy \emph{learning} scenarios.
Note that while $\widehat{V}_{\rm IPS}$ neatly decomposes into a single sum over samples, $\widehat{V}_{\rm SNIPS}$ no longer does.
Whilst this may be clear from the gradient formulation in Eq.~\ref{eq:SNIPS_gradient}, a formal proof can be found in~\cite[App. C]{Joachims2018}.
This implies that mini-batch optimization methods (which are often necessary to support learning from large datasets) are no longer directly applicable to $\widehat{V}_{\rm SNIPS}$.

\citet{Joachims2018} solve this by re-framing the task of maximising $\widehat{V}_{\rm SNIPS}$ as an optimization problem on $\widehat{V}_{\rm IPS}$ with a constraint on the self-normalisation term.
That is, if we define:
\begin{equation}
\mbox{}\hspace*{-2mm}
    \pi^{\star} \!=\! \argmax_{\pi \in \Pi} \widehat{V}_{\rm SNIPS}(\pi, \mathcal{D}), \text{ with } S^{\star} \!=\! \frac{1}{\mleft|\mathcal{D}\mright|}
    \!\sum_{(x,a,r) \in \mathcal{D}} \!
    \frac{\pi^{\star}(a|x)}{\pi_{0}(a|x)},
\end{equation}
then, we can equivalently state this as:
\begin{equation}\label{eq:constr_opt}
    \pi^{\star} = \argmax_{\pi \in \Pi} \widehat{V}_{\rm IPS}(\pi, \mathcal{D}), \text{ s.th. } \frac{1}{\mleft|\mathcal{D}\mright|}\sum_{(x,a,r) \in \mathcal{D}} \frac{\pi(a|x)}{\pi_{0}(a|x)} = S^{\star}.
\end{equation}
\citet{Joachims2018} show via the Lagrange multiplier method that this optimization problem can be solved by optimising for $\widehat{V}_{\rm IPS}$ with a translation on the reward:
\begin{equation}\label{eq:banditnet}
\begin{split}
    \pi^{\star} =  \argmax_{\pi \in \Pi} \widehat{V}_{\lambda^{\star}\text{-}{\rm IPS}}(\pi, \mathcal{D}), \text{where }\\ 
    \widehat{V}_{\lambda\text{-}{\rm IPS}}(\pi,\mathcal{D}) = \frac{1}{\mleft|\mathcal{D}\mright|} \sum_{(x,a,r) \in \mathcal{D}} \frac{\pi(a|x)}{\pi_{0}(a|x)} (r-\lambda).
\end{split}
\end{equation}
This approach is called BanditNet~\cite{Joachims2018}.
Naturally, we do not know $\lambda^{\star}$ beforehand (because we do not know $S^{\star}$), but we know that $S^{\star}$ should concentrate around 1 for large datasets (see Eq.~\ref{eq:control_variate}).
\citet{Joachims2018} essentially propose to treat $\lambda$ as a hyper-parameter to be tuned in order to find $S^{\star}$.

\noindent
\textbf{Doubly robust estimation. }
Another way to reduce the variance of $\widehat{V}_{\rm IPS}$ is to use a model of the reward $\widehat{r}(a,x)\approx \mathbb{E}[R|X=x;A=a]$.
Including it as an additive control variate in Eq.~\ref{eq:IPS} gives rise to the doubly robust (DR) estimator, deriving its name from its unbiasedness if \emph{either} the logging propensities $\pi_{0}$ or the reward model $\widehat{r}$ is unbiased~\cite{Dudik2014}:
\begin{align}\label{eq:DR}
    &\widehat{V}_{\rm DR}(\pi,\mathcal{D}) = \\
    &\;\;\; \frac{1}{\mleft|\mathcal{D}\mright|} \sum_{(x,a,r) \in \mathcal{D}} \mleft(\frac{\pi(a\mid x)}{\pi_{0}(a\mid x)} (r-\widehat{r}(a,x)) + \sum_{a^{\prime} \in \mathcal{A}} \pi(a^{\prime}\mid x)\widehat{r}(a^{\prime},x)\mright). \nonumber
\end{align}
Several further extensions have been proposed in the literature: one can optimize the reward model $\widehat{r}(a,x)$ to minimize the resulting variance of $\widehat{V}_{\rm DR}$~\cite{Farajtabar2018}, further parameterise the trade-off relying on $\widehat{V}_{\rm IPS}$ or  $\widehat{r}(a,x)$~\cite{Su2019}, or shrink the IPS weights to minimize a bound on the MSE of the resulting estimator~\cite{su2020doubly}.
One disadvantage of this method, is that practitioners are required to fit the secondary reward model $\widehat{r}(a,x)$, which might be costly and sample inefficient.
Furthermore, variance reduction is generally not guaranteed, and stand-alone $\widehat{V}_{\rm IPS}$ can be empirically superior in some scenarios~\cite{Jeunen2020REVEAL}.

\section{Unifying Off-Policy Estimators}
Section~\ref{sec:sec1} provides an overview of (asymptotically) unbiased estimators for the value of a policy.
We have introduced the contextual bandit setting, detailing often used variance reduction techniques for both on-policy (i.e., regression adjustments and baseline corrections) and off-policy estimation (i.e., self-normalisation and doubly robust estimation).
In this section, we demonstrate that they perform equivalent optimization as baseline-corrected estimation.
Subsequently, we characterize the baseline corrections that either minimize the variance of the estimator, or that of its gradient.

\subsection{A unified off-policy estimator}

\textbf{Baseline corrections for $\nabla \widehat{V}_{{\rm IPS}}(\pi,\mathcal{D})$.}
Baseline corrections are common in on-policy estimation, but occur less often in the off-policy literature.
The estimator is obtained by removing a baseline control variate $\beta \in \mathbb{R}$ from the reward of each action, while also adding it to the estimator:
\begin{equation}    
    \widehat{V}_{\beta\text{-}{\rm IPS}} = \beta + \frac{1}{\mleft|\mathcal{D}\mright|}\sum_{(x,a,r) \in \mathcal{D}} \frac{\pi(a|x)}{\pi_{0}(a|x)} (r-\beta).
    \label{eq:beta_ips_ope}
\end{equation} 
Its unbiasedness is easily verified:
\begin{equation}
\begin{split}
    \mathop{\mathbb{E}} \mleft[\widehat{V}_{\beta\text{-}{\rm IPS}} \mright] &= \mathop{\mathbb{E}} \mleft[\beta \mright] + \mathop{\mathbb{E}} \mleft[\frac{\pi(a|x)}{\pi_{0}(a|x)} (r-\beta)  \mright] \\ 
    &= \beta + \mathop{\mathbb{E}} \mleft[\frac{\pi(a|x)}{\pi_{0}(a|x)} r  \mright] - \beta \qquad = V(\pi).
\end{split}
\label{eq:betaIPSunbiased}
\end{equation}
From an optimization perspective, we are mainly interested in the gradient of the $\widehat{V}_{{\rm \beta\text{-}IPS}}$ objective:
\begin{equation}\label{eq:MC_gradient_bIPS}
    \nabla \widehat{V}_{\beta\text{-}{\rm IPS}}(\pi,\mathcal{B}) = \frac{1}{\mleft|\mathcal{B}\mright|} \sum_{(x,a,r) \in \mathcal{B}}  \frac{ \nabla \pi(a|x)}{\pi_{0}(a|x)} \mleft(r - \beta \mright) .
\end{equation}
Our key insight is that SNIPS and certain doubly-robust estimators have an equivalent gradient to the proposed $\beta$-IPS estimator.
As a result, optimizing them is equivalent to optimizing $\widehat{V}_{\beta\text{-}{\rm IPS}}$ for a specific $\beta$ value.

\noindent\textbf{Self-normalisation through BanditNet and $\widehat{V}_{\lambda\text{-}{\rm IPS}}(\pi,\mathcal{D})$.}
If we consider the optimization problem for SNIPS that is solved by BanditNet in Eq.~\ref{eq:banditnet}~\cite{Joachims2018}, we see that its gradient is given by:
\begin{equation}
    \label{eq:banditnet_grad}
    \nabla \widehat{V}_{\lambda\text{-}{\rm IPS}}(\pi,\mathcal{B}) = \frac{1}{\mleft|\mathcal{B}\mright|} \sum_{(x,a,r) \in \mathcal{B}}  \frac{ \nabla \pi(a|x)}{\pi_{0}(a|x)} \mleft(r - \lambda \mright).
\end{equation}

\noindent \textbf{Doubly robust estimation via $\widehat{V}_{\rm DR}(\pi,\mathcal{D})$}.
As mentioned, a nuisance of doubly robust estimators is the requirement of fitting a regression model $\widehat{r}(a,x)$.
Suppose that we instead treat $\widehat{r}$ as a single scalar hyper-parameter, akin to the BanditNet approach.
Then, the gradient of such an estimator would be given by:
\begin{equation}
    \nabla \widehat{V}_{\widehat{r}\text{-}{\rm DR}}(\pi,\mathcal{B}) = \frac{1}{\mleft|\mathcal{B}\mright|} \sum_{(x,a,r) \in \mathcal{B}}  \frac{ \nabla \pi(a|x)}{\pi_{0}(a|x)} \mleft(r - \widehat{r} \mright).
    \label{eq:dr_grad_const_rew}
\end{equation}

\noindent%
Importantly, these three approaches are motivated through entirely different lenses: minimizing gradient variance, applying a multiplicative control variate to reduce estimation variance, and applying an additive control variate to improve robustness.
But they result in equivalent gradients, and thus, in equivalent optima.
Specifically, for optimization, the estimators are equivalent when $\beta\equiv\lambda\equiv\widehat{r}$.

This equivalence implies that the choice between these three approaches is not important.
Since the simple baseline correction estimator $\widehat{V}_{\beta\text{-}{\rm IPS}}$ (Eq.~\ref{eq:beta_ips_ope}) has an equivalence with all SNIPS estimators and all doubly-robust estimators with a constant reward, we propose that $\widehat{V}_{\beta\text{-}{\rm IPS}}$ should be seen as an estimator that unifies all three approaches.
Accordingly, we argue that the real task is to find the optimal $\beta$ value for $\widehat{V}_{\beta\text{-}{\rm IPS}}$, since this results in an estimator that is at least as optimal as any estimator in the underlying families of estimators, and possibly superior to them.

The remainder of this section describes the optimal $\beta$ values for minimizing gradient variance and estimation value variance.

\subsection{Minimizing gradient variance}
\label{sec:grad_var}
Similar to the on-policy variant derived in Eq.~\ref{eq:onpolicy_optimal_baseline}, we can derive the optimal baseline in the off-policy case as the one which results in the minimum variance for the gradient estimate given by Eq.~\ref{eq:MC_gradient_IPS}:
\begin{align}
    \argmin_\beta \mathrm{Var} & \mleft( \nabla_{\theta} (\widehat{V}_{\beta\text{-}{\rm IPS}}(\pi_{\theta},\mathcal{B}))  \mright) \\
    &= \argmin_\beta  \frac{1}{\mleft|\mathcal{B}\mright|}  \mathop{\mathrm{Var}}\mleft[ \frac{ \nabla\pi(a|x)}{\pi_{0}(a|x)}  \mleft(r - \beta\mright)  \mright] \\
    &= \argmin_\beta  \frac{1}{\mleft|\mathcal{B}\mright|}  \mathop{\mathbb{E}}\mleft[  \| \nabla \pi(a|x))\|^2_2  \Big( \frac{r - \beta}{\pi_{0}(a|x)} \Big)^2 \ \mright] \label{eq:grad_var_step2} \\
    &\qquad\qquad\qquad - \frac{1}{\mleft|\mathcal{B}\mright|} \|  
 \mathop{\mathbb{E}}\mleft[  \frac{ \nabla\pi(a|x)}{\pi_{0}(a|x)}  \mleft(r - \beta\mright)  \mright]  \|^2_2 \nonumber \\
    &=\argmin_\beta  \frac{1}{\mleft|\mathcal{B}\mright|}  \mathop{\mathbb{E}}\mleft[  \frac{ \| \nabla \pi(a|x))\|^2_2}{\pi_{0}(a|x)^2}  \mleft( r -\beta \mright)^2  \ \mright] \label{eq:quadratic}, 
\end{align}
where we can ignore the second term of the variance in Eq.~\ref{eq:grad_var_step2}, since it is independent of $\beta$~\cite[Eq. 12]{Mohamed2020}. The optimal baseline can be obtained by solving for:
\begin{equation}
    \frac{\partial \mathrm{Var}\mleft(\nabla (\widehat{V}_{\beta\text{-}{\rm IPS}}(\pi,\mathcal{B}))\mright)}{\partial \beta} = 
    \frac{2}{\mleft|\mathcal{B}\mright|}  \mathop{\mathbb{E}}\mleft[ \frac{ \| \nabla \pi(a|x))\|^2_2}{\pi_{0}(a|x)^2} \mleft( \beta -  r  \mright) \mright] = 0,
\label{eq:first_order_condition}
\end{equation}
which results in the following optimal baseline:
\begin{equation}\label{eq:optimal_offpolicy}
    \beta^{*} =  \frac{\mathop{\mathbb{E}}\limits_{x, a \sim \pi_{0}, r}\mleft[ \frac{ \| \nabla \pi(a|x))\|^2_2}{\pi_{0}(a|x)^2} r(a,x) \mright]}{\mathop{\mathbb{E}}\limits_{x, a \sim \pi_{0}, r}\mleft[  \frac{ \| \nabla \pi(a|x))\|^2_2}{\pi_{0}(a|x)^2} \mright]},
\end{equation}
with its empirical estimate given by:
\begin{equation}
    \widehat{\beta^{*}} =  \frac{\sum_{(x,a,r) \in \mathcal{B}} \mleft[  \frac{ \| \nabla \pi(a|x))\|^2_2}{\pi_{0}(a|x)^2} r \mright]}{\sum_{(x,a,r) \in \mathcal{B}}\mleft[  \frac{ \| \nabla \pi(a|x))\|^2_2}{\pi_{0}(a|x)^2} \mright]}.
\end{equation}
%



\noindent%
Note that this expectation is over actions sampled by the \emph{logging} policy.
As a result, we can obtain Monte Carlo estimates of the corresponding expectations.
The derivation has high similarity with the on-policy case (cf.\ Section~\ref{sec:general})~\cite{Greensmith2004}.
Nevertheless, we are unaware of any work on off-policy learning that uses it.
\citet{Joachims2018} refer to the on-policy variant with: ``\emph{we cannot sample new roll-outs from the current policy under consideration, which means we cannot use the standard variance-optimal estimator used in REINFORCE}.''
Since the expectation is over actions sampled by the \emph{logging} policy and not the \emph{target} policy, we have shown that we do not need new roll-outs.
Thereby, our estimation strategy is a novel off-policy approach that estimates the variance-optimal baseline.

\begin{theorem}
\label{thrm:min_grad_var}
Within the family of gradient estimators with a global additive control variate, i.e., $\beta$-IPS (Eq.~\ref{eq:MC_gradient_bIPS}), IPS (Eq.~\ref{eq:MC_gradient_IPS}), BanditNet (Eq.~\ref{eq:banditnet_grad}), and DR with a constant correction (Eq.~\ref{eq:dr_grad_const_rew}), $\beta$-IPS with our proposed choice of $\beta$ in Eq.~\ref{eq:optimal_offpolicy} has minimal gradient variance.
\end{theorem}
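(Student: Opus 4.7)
The plan is to recognize all four estimators in the theorem's family as members of a single one-parameter family indexed by an additive scalar $c\in\mathbb{R}$, and then read off the optimum from the quadratic structure already established in Eqs.~\ref{eq:quadratic}--\ref{eq:optimal_offpolicy}. Specifically, IPS (Eq.~\ref{eq:MC_gradient_IPS}) corresponds to $c=0$, BanditNet (Eq.~\ref{eq:banditnet_grad}) to $c=\lambda$, DR with constant reward (Eq.~\ref{eq:dr_grad_const_rew}) to $c=\widehat{r}$, and $\beta$-IPS (Eq.~\ref{eq:MC_gradient_bIPS}) to $c=\beta$. All four share the identical functional form $\frac{1}{|\mathcal{B}|}\sum \frac{\nabla\pi(a|x)}{\pi_{0}(a|x)}(r-c)$, so their gradient variances coincide as functions of $c$ alone.

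Next, I would invoke the derivation leading to Eq.~\ref{eq:quadratic}, which expresses this common variance as a quadratic in $c$ whose $c^{2}$-coefficient is proportional to $\mathbb{E}\bigl[\|\nabla\pi(a|x)\|_{2}^{2}/\pi_{0}(a|x)^{2}\bigr]\geq 0$. Assuming this coefficient is strictly positive, i.e., the policy gradient is not almost-surely zero on the support of $\pi_{0}$, the variance is a strictly convex quadratic in $c$. The first-order condition in Eq.~\ref{eq:first_order_condition} then has a unique solution, namely $\beta^{*}$ given in Eq.~\ref{eq:optimal_offpolicy}, and by strict convexity this $\beta^{*}$ is the unique global minimizer over $c\in\mathbb{R}$.

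To conclude, since each of the three alternative estimators in the family corresponds to a specific value $c\in\{0,\lambda,\widehat{r}\}$, each yields gradient variance at least as large as that of $\beta$-IPS with $\beta=\beta^{*}$, with strict inequality whenever that value differs from $\beta^{*}$. The only subtle point is the degenerate case in which the $c^{2}$-coefficient vanishes; there the variance is constant in $c$ and all four estimators are trivially tied, so the (weak) optimality claim still holds. I expect this degeneracy check to be the main obstacle worth flagging, since every other step follows directly from the quadratic-in-$c$ structure already derived in the preceding subsection.
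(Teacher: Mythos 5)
Your proposal is correct and follows essentially the same route as the paper's own proof: both reduce all four estimators to a single scalar-indexed family and conclude from the quadratic (hence convex) dependence of the gradient variance on that scalar, via the first-order condition in Eq.~\ref{eq:first_order_condition}, that the $\beta^{*}$ of Eq.~\ref{eq:optimal_offpolicy} is the global minimizer. Your explicit handling of the degenerate case where the coefficient $\mathbb{E}\bigl[\|\nabla\pi(a|x)\|_{2}^{2}/\pi_{0}(a|x)^{2}\bigr]$ vanishes is a small extra care the paper omits, but it does not change the argument.
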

\begin{proof}
Eq.~\ref{eq:first_order_condition} shows that the $\beta$ value in Eq.~\ref{eq:optimal_offpolicy} attains a minimum.
Because the variance of the gradient estimate (Eq.~\ref{eq:grad_var_step2}) is a quadratic function of $\beta$, and hence a convex function (Eq.~\ref{eq:quadratic}), it must be the global minimum for the gradient variance. 
\end{proof}

\subsection{Minimizing estimation variance}
\label{sec:estimator_var}

Besides minimizing gradient variance, one can also aim to minimize the variance of estimation, i.e., the variance of the estimated value.
We note that the $\beta$ value for minimizing estimation need not be the same value that minimizes gradient variance.
Furthermore, since $\widehat{V}_{\beta\text{-}{\rm IPS}}$ is unbiased, any estimation error will entirely be driven by variance.
As a result, the value for $\beta$ that results in minimal variance will also result in minimal estimation error:
\begin{align}
    & \argmin_\beta \mathrm{Var} \mleft( \widehat{V}_{\beta\text{-}{\rm IPS}}(\pi, \mathcal{D}) \mright) \\
    &\qquad = \argmin_\beta  \frac{1}{\mleft|\mathcal{D}\mright|} \mathop{\mathrm{Var}}\mleft[ \frac{ \pi(a|x)}{\pi_{0}(a|x)}  \mleft(r - \beta\mright)  \mright] \\
    &\qquad = \argmin_\beta \frac{1}{\mleft|\mathcal{D}\mright|}  \mathop{\mathbb{E}}\mleft[ \Big( \frac{ \pi(a|x)}{\pi_{0}(a|x)}  \mleft(r - \beta\mright) \Big)^2 \ \mright]\\
    & \qquad\qquad\qquad\qquad\qquad - \frac{1}{\mleft|\mathcal{D}\mright|} \Big(  
 \mathop{\mathbb{E}}\mleft[  \frac{ \pi(a|x)}{\pi_{0}(a|x)}  \mleft(r - \beta\mright)  \mright]  \Big)^2 \nonumber \\
    &\qquad = \argmin_\beta \frac{1}{\mleft|\mathcal{D}\mright|}  \mathop{\mathbb{E}}\mleft[  \mleft(\frac{ \pi(a|x)}{\pi_{0}(a|x)}\mright) ^2 \mleft( r -\beta \mright)^2  \ \mright]\\
    & \qquad\qquad\qquad\qquad\qquad  - \frac{1}{\mleft|\mathcal{D}\mright|} \Big(  
 \mathop{\mathbb{E}}\mleft[  \frac{ \pi(a|x)}{\pi_{0}(a|x)}  r  \mright] - \beta  \Big)^2.  \nonumber 
\end{align}
The minimum is obtained by solving for the following equation:
\begin{align}
    &\frac{\partial \mleft(\mathrm{Var} \mleft( \widehat{V}_{\beta\text{-}{\rm IPS}}(\pi, \mathcal{D}) \mright)\mright)}{\partial \beta} \label{eq:gradientzeroMSE}  \\
    &\quad\; = \frac{2}{\mleft|\mathcal{D}\mright|}  \mathop{\mathbb{E}}\mleft[  \mleft(\frac{\pi(a|x)}{\pi_{0}(a|x)}\mright) ^2\mleft( \beta -  r  \mright)  \ \mright] - \frac{2}{\mleft|\mathcal{D}\mright|} \mleft( \beta - \mathop{\mathbb{E}}\mleft[  \frac{ \pi(a|x)}{\pi_{0}(a|x)}  r  \mright] \mright) = 0, \nonumber
\end{align}
%
which results in the following optimal baseline:
\begin{align}\label{eq:optimal_offpolicy_beta}
    \beta^{*} &=  \frac{\mathop{\mathbb{E}}\mleft[  \mleft(\mleft(\frac{ \pi(a|x)}{\pi_{0}(a|x)}\mright) ^2  - \frac{ \pi(a|x)}{\pi_{0}(a|x)}\mright) r(a,x)  \mright]}{\mathop{\mathbb{E}}\mleft[  \mleft(\frac{ \pi(a|x)}{\pi_{0}(a|x)}\mright) ^2 - \mleft(\frac{ \pi(a|x)}{\pi_{0}(a|x)}\mright) \mright]}.
\end{align}
%
We can estimate $\beta^{*}$ using logged data, resulting in a Monte Carlo estimate of the optimal baseline.
Such a sample estimate will not be unbiased (because it is a ratio of expectations), but the bias will vanish asymptotically (similar to the bias of the $\widehat{V}_{\rm SNIPS}$ estimator). 

Next, we formally prove that optimal estimator variance leads to overall optimality (in terms of the MSE of the estimator).

\begin{theorem}
Within the family of offline estimators with a global additive control variate, i.e., $\beta$-IPS (Eq.~\ref{eq:beta_ips_ope}), IPS (Eq.~\ref{eq:IPS}), and DR with a constant correction (Eq.~\ref{eq:dr_grad_const_rew}), $\beta$-IPS with our proposed $\beta$ in Eq.~\ref{eq:optimal_offpolicy_beta} has the minimum mean squared error (MSE): 
 \begin{equation}
     {\rm MSE}(\hat{V}(\pi)) \coloneq \mathbb{E}_{\mathcal{D}} \left[ (\hat{V}(\pi, \mathcal{D}) - V(\pi))^2 \right].
     \label{eq:estimator_mse}
\end{equation}
\end{theorem}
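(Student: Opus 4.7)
The plan is to reduce the theorem to a variance minimization over a single scalar parameter, which was already carried out in Section~\ref{sec:estimator_var}. First I would observe that the three estimator families in the statement are all instances of $\widehat{V}_{\beta\text{-}{\rm IPS}}$: ordinary IPS corresponds to $\beta=0$, and DR with a constant reward model $\widehat{r}$ corresponds to $\beta=\widehat{r}$ (as can be seen by expanding Eq.~\ref{eq:DR} when $\widehat{r}(a,x)\equiv \widehat{r}$ and noting $\sum_{a'}\pi(a'\mid x)\widehat{r}=\widehat{r}$). Hence the entire family is parameterized by $\beta\in\mathbb{R}$.

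Next I would invoke the bias-variance decomposition of MSE,
\begin{equation*}
{\rm MSE}(\widehat{V}(\pi)) = \bigl(\mathbb{E}_{\mathcal{D}}[\widehat{V}(\pi,\mathcal{D})]-V(\pi)\bigr)^2 + \mathrm{Var}\bigl(\widehat{V}(\pi,\mathcal{D})\bigr),
\end{equation*}
and use the unbiasedness of $\widehat{V}_{\beta\text{-}{\rm IPS}}$ established in Eq.~\ref{eq:betaIPSunbiased}, which holds for every $\beta\in\mathbb{R}$. Consequently the bias term vanishes uniformly in $\beta$, and minimizing MSE within the family is exactly the same problem as minimizing $\mathrm{Var}(\widehat{V}_{\beta\text{-}{\rm IPS}}(\pi,\mathcal{D}))$ over $\beta$.

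Then I would appeal directly to the derivation in Section~\ref{sec:estimator_var}: the variance, as a function of $\beta$, is a quadratic with leading coefficient $\tfrac{1}{|\mathcal{D}|}\bigl(\mathbb{E}[(\pi(a\mid x)/\pi_{0}(a\mid x))^{2}]-1\bigr)$. Since $\mathbb{E}[\pi(a\mid x)/\pi_{0}(a\mid x)]=1$ under common support, Jensen's inequality gives $\mathbb{E}[(\pi(a\mid x)/\pi_{0}(a\mid x))^{2}]\geq 1$, so this coefficient is non-negative and the variance is convex in $\beta$. The first-order condition in Eq.~\ref{eq:gradientzeroMSE} then identifies its unique (or, in the degenerate case $\pi\equiv\pi_{0}$, universally attained) minimum at exactly the $\beta^{*}$ given in Eq.~\ref{eq:optimal_offpolicy_beta}. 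Combining this with the fact that MSE equals variance throughout the family completes the proof.

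The only subtlety I anticipate is handling the degenerate case where the quadratic collapses to a constant (i.e.\ when the importance weights are almost surely $1$, so $\pi=\pi_{0}$); there every choice of $\beta$ is optimal, including $\beta^{*}$, so the statement still holds. Beyond that, the proof is essentially bookkeeping: identifying IPS and constant-reward DR as members of the $\beta$-IPS family, noting that unbiasedness collapses MSE to variance, and citing the convex quadratic minimization already completed earlier.
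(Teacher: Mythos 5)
Your proposal is correct and follows essentially the same route as the paper: the bias--variance decomposition of the MSE, the observation that $\widehat{V}_{\beta\text{-}{\rm IPS}}$ is unbiased for every $\beta$ (Eq.~\ref{eq:betaIPSunbiased}) so that minimizing MSE reduces to the variance minimization of Section~\ref{sec:estimator_var}. You go slightly further than the paper by explicitly verifying via Jensen's inequality that the variance is a convex quadratic in $\beta$ (so the stationary point of Eq.~\ref{eq:gradientzeroMSE} is indeed a minimum) and by handling the degenerate case $\pi=\pi_0$ --- details the paper leaves implicit --- but the underlying argument is the same.
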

\begin{proof}
The MSE of any off-policy estimator $\hat{V}(\pi, \mathcal{D})$ can be decomposed in terms of the bias and variance of the estimator~\cite{su2020doubly}:
 \begin{equation}
     \text{MSE}(\hat{V}(\pi)) = \text{Bias}(\hat{V}(\pi), \mathcal{D})^2 + \text{Variance}(\hat{V}(\pi), \mathcal{D}),
\end{equation}
where the bias of the estimator is defined as:
\begin{equation}
\text{Bias}(\hat{V}(\pi), \mathcal{D}) = \left| \mathbb{E}_{\mathcal{D}} \mleft[ \hat{V}(\pi, \mathcal{D}) - V(\pi)\mright]  \right|,
\end{equation}
and the variance of the estimator is defined previously (see Section~\ref{sec:estimator_var}).
Eq.~\ref{eq:betaIPSunbiased} proves that $\beta$-IPS is unbiased: $\text{Bias}(\hat{V}(\pi), \mathcal{D})=0$.
Thus, the minimum variance (Eq.~\ref{eq:gradientzeroMSE}) implies minimum MSE.
\end{proof}

%
\noindent%
We note that SNIPS is not covered by this theorem, as it is only asymptotically unbiased.
As a result, the variance reduction brought on by SNIPS might be higher than that by $\beta$-IPS, but as it introduces bias, its estimation error (MSE) is not guaranteed to be better.
Our experimental results below indicate that our method is always at least as good as SNIPS, and outperforms it in most cases, in both learning and evaluation tasks.

\begin{figure*}[th]
    \centering
    \includegraphics[width=\linewidth]{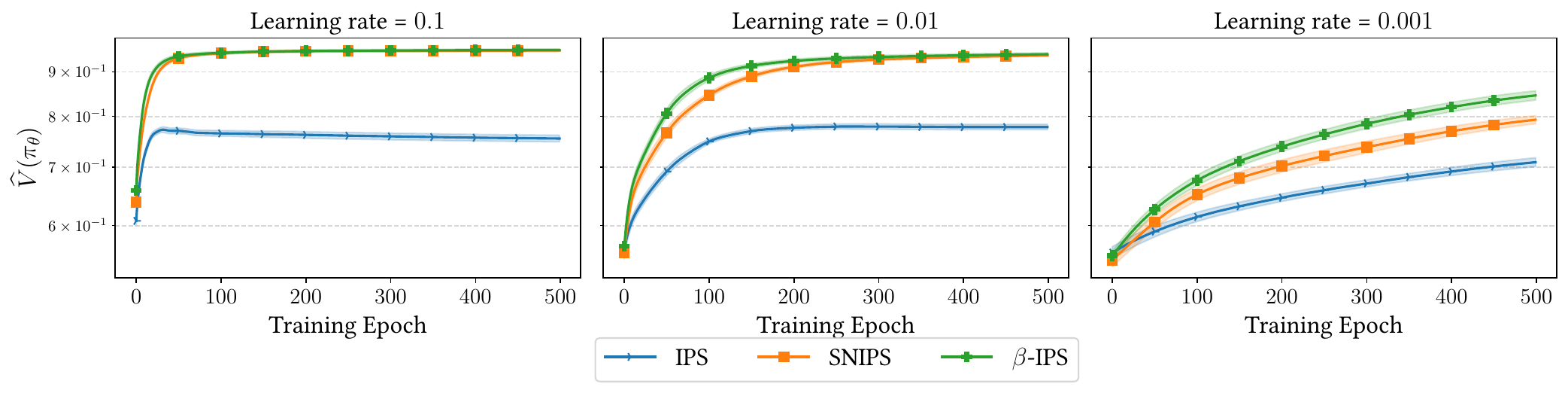}
    \caption{Performance of different off-policy learning methods trained in a full-batch gradient descent fashion in terms of the policy value on the test set. x-axis corresponds to the training epoch during the optimization (we use a maximum of 500 epochs for all methods), and y-axis corresponds to the policy value.
    A decaying learning rate is used.
    Reported results are averages over 32 independent runs with 95\% confidence interval. }
\label{fig:full_batch_val}
\end{figure*}
\begin{figure*}[th]
\vspace{-2mm}
    \centering
    \includegraphics[width=\linewidth]{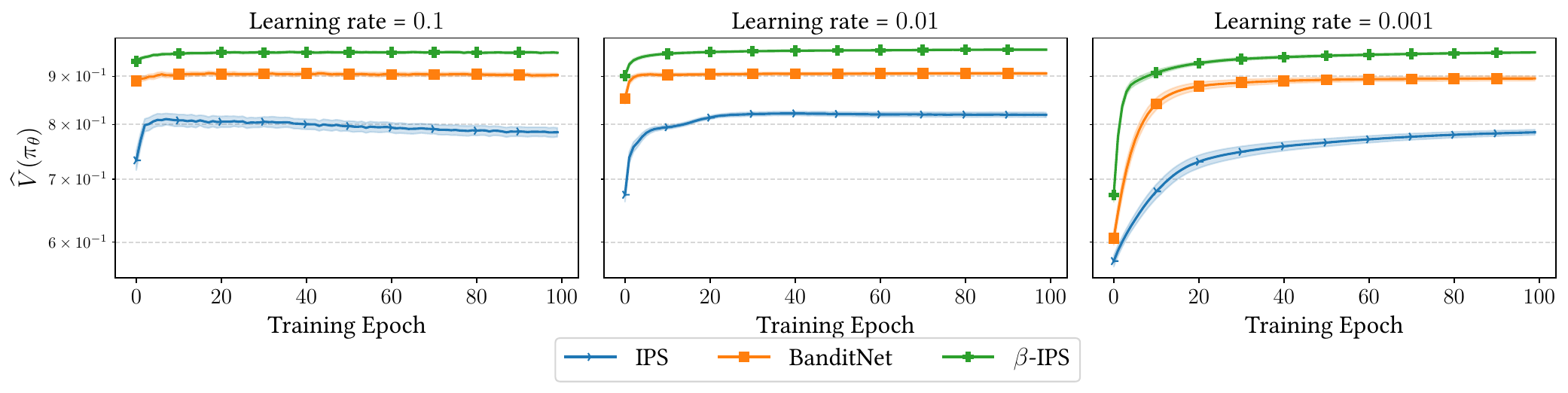}
    \caption{Performance of different off-policy learning methods trained in a mini-batch gradient descent fashion in terms of policy value on the test set. The axis labels are similar to Figure~\ref{fig:full_batch_val}.}
\label{fig:mini_batch_val}
\end{figure*}

\section{Experimental Setup}
In order to evaluate off-policy learning and evaluation methods, we need access to logged data sampled from a stochastic policy involving logging propensities (exact or estimated) along with the corresponding context and action pairs.
Recent work that focuses on off-policy learning or evaluation for contextual bandits in recommender systems follows a supervised-to-bandit conversion process to simulate a real-world bandit feedback dataset~\cite{Jeunen2021_TopK,Jeunen2021_Pessimism, Jeunen2023_AuctionGym, Saito2021_OPE,su2020doubly,Su2019}, or conducts a live experiment on actual user traffic to evaluate the policy in an \emph{on-policy} or \emph{online} fashion~\cite{chen2019top,chen2022actorcritic}.
In this work, we adopt the Open Bandit Pipeline (OBP) to simulate, in a reproducible manner, real-world recommendation setups with stochastic rewards, large action spaces, and controlled randomization~\cite{rohde2018recogym}.
Although the Open Bandit Pipeline simulates a generic offline contextual bandit setup, there is a strong correspondence to real-world recommendation setups where the environment context vector corresponds to the user context and the actions correspond to the items recommended to the user.
Finally, the reward corresponds to the user feedback received on the item (click, purchase, etc.).
As an added advantage, the simulator allows us to conduct experiments in a \emph{realistic} setting where the logging policy is sub-optimal to a controlled extent, the logged data size is limited, and the action space is large.
In addition, we conduct experiments with real-world recommendation logs from the OBP for off-policy evaluation.\footnote{\url{https://research.zozo.com/data.html}} 

The research questions we answer with our experimental results are:
\begin{enumerate}[label=\textbf{RQ\arabic*},leftmargin=*]
    \item Does the proposed \emph{estimator-variance-minimizing} baseline correction (Eq.~\ref{eq:optimal_offpolicy_beta}) improve off-policy learning (OPL) in a full-batch setting?
    \item Does the proposed \emph{gradient-variance-minimizing} baseline correction (Eq.~\ref{eq:optimal_offpolicy}) improve OPL in a mini-batch setting? 
    \item How does the proposed \emph{gradient-variance-minimizing} baseline correction (Eq.~\ref{eq:optimal_offpolicy}) affect gradient variance during OPL? 
    \item Does the proposed \emph{estimator-variance-minimizing} baseline correction (Eq.~\ref{eq:optimal_offpolicy_beta}) improve OPE performance?
\end{enumerate}
\begin{figure*}
    \centering
    \includegraphics[width=\linewidth]{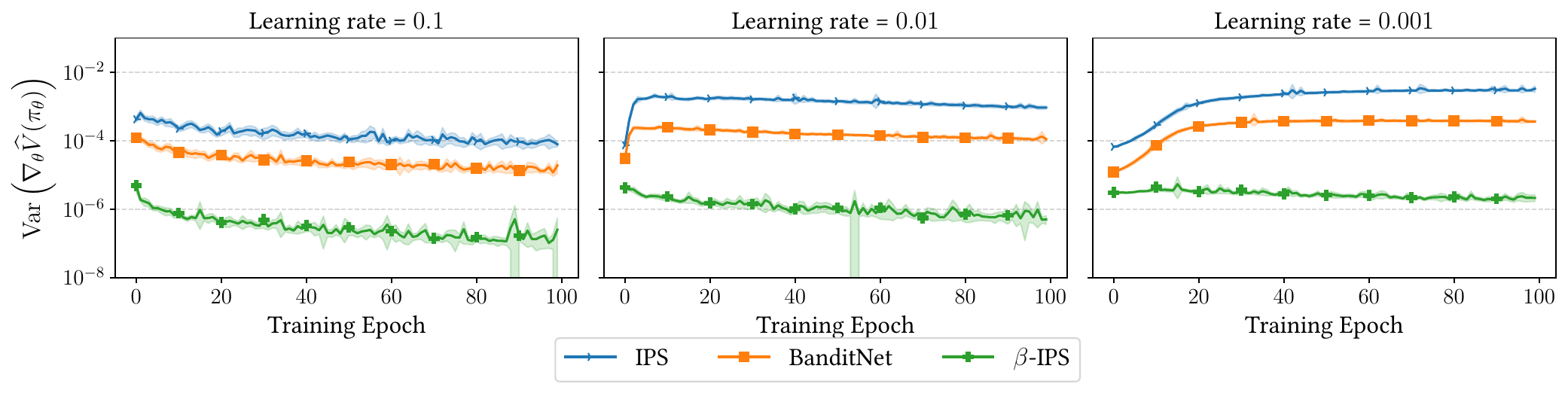}
    \caption{Empirical variance of the gradient of different off-policy learning estimators in a mini-batch optimization setup with varying learning rates (in title). We compute gradient variance for each mini-batch during training and then report the average value across all mini-batches in a training epoch. Results are averaged across 32 independent runs with 95\% confidence interval.}
\label{fig:grad_var}
\end{figure*}




\section{Results and Discussion}
\subsection{Off-policy learning performance (RQ1--3) }
To evaluate the performance of the proposed $\beta$-IPS method on an OPL task, we consider two learning setups:
\begin{enumerate}[leftmargin=*]
    \item  \emph{Full-batch}. In this setup, we  directly optimize the $\beta$-IPS policy value estimator (Eq.~\ref{eq:beta_ips_ope}) with the optimal baseline correction, which minimizes the variance of the value (Eq.~\ref{eq:optimal_offpolicy_beta}). Given that the optimal baseline correction involves a ratio of two expectations, optimizing the value function directly via a mini-batch stochastic optimization is not possible for the same reason as the SNIPS estimator, i.e., it is not possible to get an unbiased gradient estimate with a ratio function~\cite{Joachims2018}. Therefore, for this particular setting,  we use a \emph{full-batch} gradient descent method for the optimization, where the gradient is computed over the entire training dataset. 
    \item  \emph{Mini-batch}. In this setup, we focus on optimizing the $\beta$-IPS policy value estimator with the baseline correction, which minimizes the gradient estimate (Eq.~\ref{eq:optimal_offpolicy}). This setup translates to a traditional machine learning training setup where the model is optimized in a stochastic mini-batch fashion. 
\end{enumerate}

\noindent \textbf{Full-batch.} The results for the full-batch training in terms of the policy value on the test set are reported in Figure~\ref{fig:full_batch_val}, over the number of training epochs. 
To minimize the impact of external factors, we use a linear model without bias, followed by a softmax to generate a distribution over all actions, given a context vector $x$ (this is a common setup, see~\cite[e.g.,][]{Jeunen2020,Jeunen2020REVEAL,Sakhi2020}).
We note that the goal of this work is not to get the maximum possible policy value on the test set but rather to evaluate the effect of baseline corrections on gradient and estimation variance. The simple model setup allows us to easily track the empirical gradient variance, given that we have only one parameter vector. 

An advantage of the full-batch setup is that we can compute the gradient of the SNIPS estimator directly~\cite{Swaminathan2015}.
SNIPS is a natural baseline method to consider, along with the traditional IPS estimator.
Because of practical concerns, we only consider 500 epochs of optimization.
Additionally, we use the state-of-the-art and widely used Adam optimizer~\cite{kingma2014adam}. 

The IPS method converges to a lower test policy value in comparison to the SNIPS and the proposed $\beta$-IPS methods, even after 500 epochs.
A likely reason is the high-variance of the IPS estimator~\cite{Dudik2014}, which can cause it to get stuck in  bad local minima. 

The methods with a control variate, i.e., SNIPS (with multiplicative control variate) and $\beta$-IPS (with additive control variate) converge to substantially better test policy values.
In terms of the convergence speed, $\beta$-IPS converges to the optimal value faster than the SNIPS estimator, most likely because it has lower estimator variance than SNIPS. 
With this, we can answer RQ1 as follows: in the full-batch setting, our proposed optimal baseline correction enables $\beta$-IPS to converge faster than SNIPS at similar performance.
 
\noindent \textbf{Mini-batch.} The results for mini-batch training in terms of the test policy value are reported in Figure~\ref{fig:mini_batch_val}.
Different from the full-batch setup, where the focus is on reducing the variance of the \emph{estimator} value (Section~\ref{sec:estimator_var}), in the mini-batch mode, the focus is on reducing the variance of the gradient estimate (Section~\ref{sec:grad_var}).
The model and training setup are similar to the full-batch mode, except that we fixed the batch size to 1024 for the mini-batch experiments. 
Preliminary results indicated that the batch size hyper-parameter has a limited effect.

Analogous to the full-batch setup, the IPS estimator results in a lower test policy value, most likely because of the high gradient variance which prevents convergence to high performance.
In contrast, due to their baseline corrections, BanditNet (Eq.~\ref{eq:banditnet_grad}) and $\beta$-IPS have a lower gradient variance.
Accordingly, they also converge to better performance~\cite{bottou2018optimization}, i.e., resulting in superior test policy values. 

Amongst these baseline-corrected gradient-based methods (BanditNet and $\beta$-IPS), our proposed $\beta$-IPS estimator outperforms BanditNet as it provides a policy with substantially higher value.
The differences are observed over different choices of learning rates.
Thus we answer RQ2 accordingly: in the mini-batch setting, our proposed gradient-minimizing baseline method results in considerably higher policy value compared to both IPS and BanditNet.

\begin{figure*}[!ht]
    \centering
    \includegraphics[width=\linewidth]{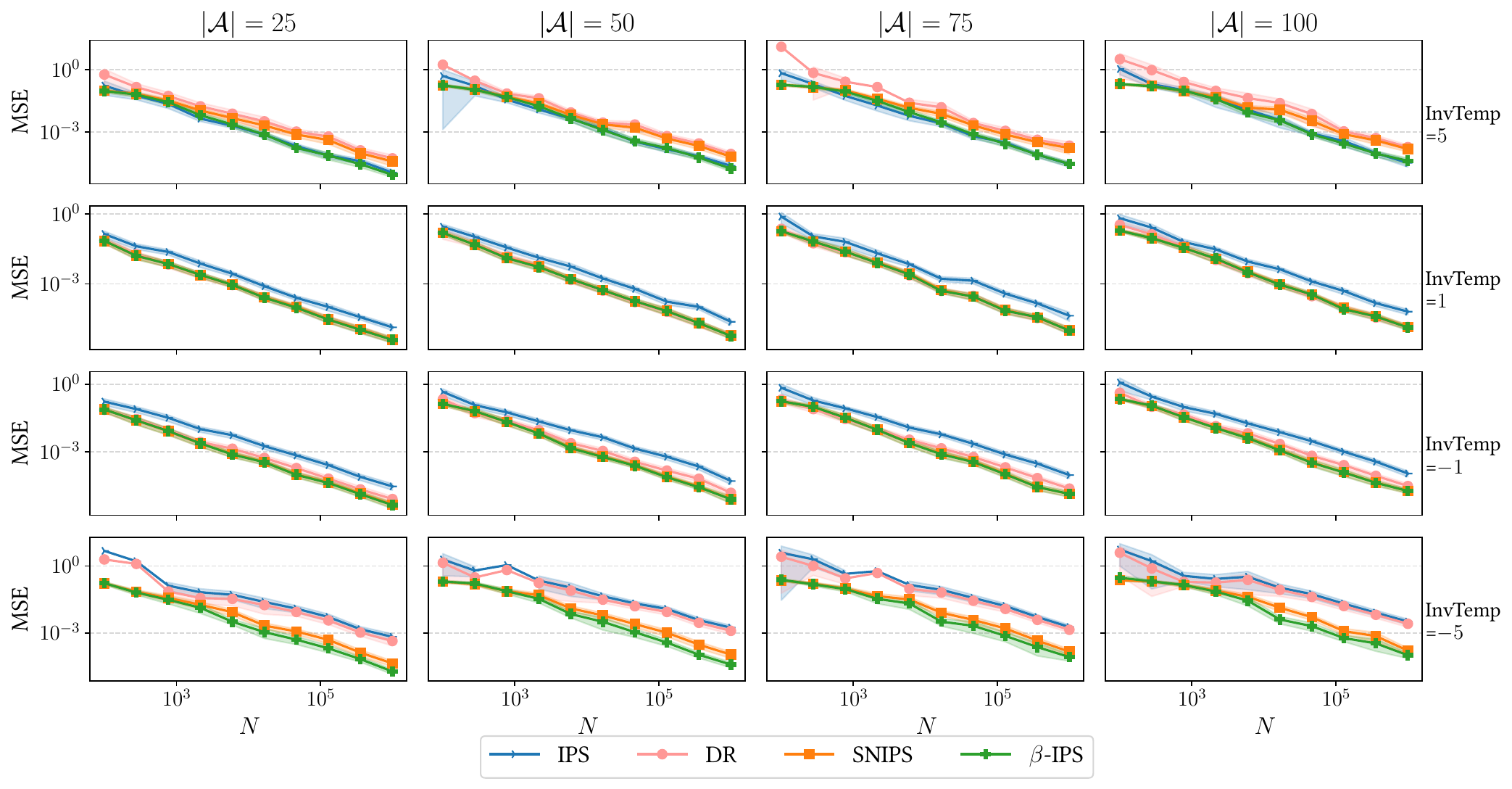}
    \caption{Mean Squared Error (MSE) of different off-policy estimators with varying action space (from left to right), and varying inverse temperature parameter of the softmax logging policy (from top to bottom). X-axis corresponds to the size of the logged data simulated (ranging from $10^2$ to $10^6$), and the y-axis corresponds to the MSE (evaluated over 100 independent samples of the synthetic data) along with 95\% confidence interval. Each row corresponds to a different setting of inverse temperature of the softmax logging policy. 
    We only consider unbiased (asymptotically or otherwise) estimators.}
    \label{fig:ope_mse}
\end{figure*}

Next, we directly consider the empirical gradient variance of different estimators;
Figure~\ref{fig:grad_var} reports the average mini-batch gradient variance per epoch.
As expected, the IPS estimator has the highest gradient variance by a large margin. 
For BanditNet, we observe a lower gradient variance, which is the desired result of the additive baseline it employs.
Finally, we observe that our proposed method $\beta$-IPS has the lowest gradient variance.
This result corroborates the theoretical claim (Theorem~\ref{thrm:min_grad_var}) that the $\beta$-IPS estimator has the lowest gradient variance amongst all global additive control variates (including IPS and BanditNet).
Our answer to RQ3 is thus clear: our proposed $\beta$-IPS results in considerably lower gradient variance compared to BanditNet and IPS.

\subsection{Off-policy evaluation performance (RQ4)}
To evaluate the performance of the proposed $\beta$-IPS method, which minimizes the estimated policy value (Eq.~\ref{eq:optimal_offpolicy_beta}), in an OPE task, results are presented in Figure~\ref{fig:ope_mse}. The target policy (to be evaluated) is a logistic regression model trained via the IPS objective on logged data and evaluated on a separate full-information test set. 
We evaluate the MSE of the estimated policy value against the \textit{true} policy value (Eq.~\ref{eq:estimator_mse}). 
To evaluate the MSE of different estimators realistically, we report results with varying degrees of the optimality of the behavior policy (decided by the inverse temperature parameter of the softmax) and with a varying cardinality of the action space. A positive (and higher) inverse softmax temperature results in a increasingly optimal behavior policy (selects action with highest reward probability), and a negative (and lower) inverse softmax temperature parameter results in an increasingly sub-optimal behavior policy (selects actions with lowest reward probability). 
Our proposed $\beta$-IPS method has the lowest MSE in all simulated settings.
Interestingly, the proposed $\beta$-IPS has a lower MSE than the DR method, which has a regression model-based control variate, arguably more powerful than the constant control variate from the proposed $\beta$-IPS method.
Similar observations have been made in previous work, e.g., \citet{Jeunen2020REVEAL} reported that the DR estimator's performance heavily depends on the logging policy.

\begin{table}[t]
\caption{Comparison of different OPE methods on real-world recommender system logs of ZOZOTOWN from a campaign targeted towards men with a uniformly random production policy. We report the mean relative absolute error (with std).}
\label{tab:method_comparison}
\centering
\begin{tabular}{lc}
\toprule
\textbf{OPE estimator} & \textbf{Abs. relative error} $\downarrow$ \\
\midrule
IPS & 0.1277 (0.0142) \\
SNIPS & 0.1113 (0.0372) \\
DR & 0.1144 (0.0366) \\
$\beta$-IPS & \textbf{0.1078 (0.0383)} \\
\bottomrule
\end{tabular}
\end{table}

Depending on the setting, we see that $\beta$-IPS either has performance comparable to the SNIPS estimator, i.e., when inverse temperature $\in \{-1, 1\}$;
or noticeably higher performance than SNIPS, i.e., when inverse temperature $\in \{-5, 5\}$.

\noindent \textbf{Real-world evaluation.} To evaluate different estimators in a real-world recommender systems setup, we report the results of OPE from the production logs of a real-world recommender system in Table~\ref{tab:method_comparison}.
Similar to the simulation setup, the proposed $\beta$-IPS has the lowest absoluate relative error amongst all estimators in the comparison.
In conclusion, we answer RQ4:
our proposed policy-value variance minimizing baseline method results in substantially improved MSE, compared to IPS, SNIPS and DR, in offline evaluation tasks that are typical recommender system use-cases.


\begin{figure*}[!ht]
    \centering
    \includegraphics[width=\textwidth]{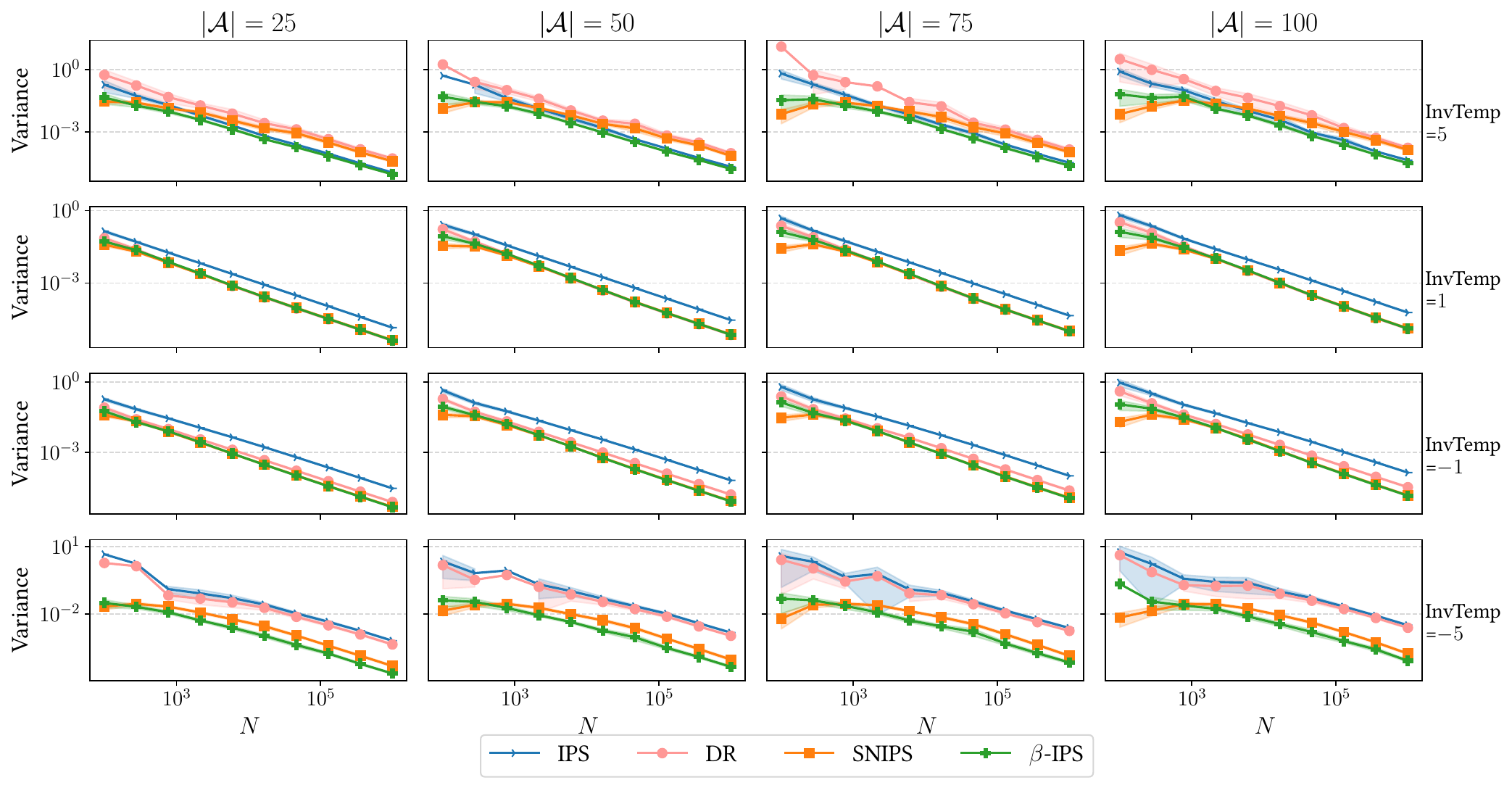}
    \caption{Empirical variance of different off-policy estimators with varying action space (from left to right), and varying sub-optimality of a temperature-based softmax behavior policy (from top to bottom). The x-axis corresponds to the size of the logged data simulated (ranging from $10^2$ to $10^6$), and the y-axis corresponds to the variance of different estimators (evaluated over 100 independent samples of the synthetic data) along with 95\% confidence interval. Each row corresponds to a different optimality level of the logging policy, decided by the inverse temperature parameter. We only consider unbiased (asymptotically or otherwise) estimators.}
    \label{fig:ope_sample_var}
\end{figure*}
\section{Conclusion and Future Work}

In this work, we have proposed to unify different off-policy estimators as equivalent additive baseline corrections.
We look at off-policy evaluation and learning settings and propose baseline corrections that minimize the variance in the estimated policy value and the empirical gradient of the off-policy learning objective.
Extensive experimental comparisons on a synthetic benchmark with realistic settings show that our proposed methods improve performance in the off-policy estimation (OPE) and off-policy learning (OPL) tasks.

We believe our work represents a significant step forward in the understanding and use of off-policy estimation methods (for both evaluation and learning use-cases), since we show that the prevalent SNIPS estimator can be improved upon with essentially no cost, as our proposed method is parameter-free and --- in contrast with SNIPS --- it retains the unbiasedness that comes with IPS.

Future work may apply a similar approach to offline reinforcement learning setups~\cite{levine2020offline}, or consider extensions of our approach for ranking applications~\cite{London2023}.


\appendix

\appendix
\section{Appendix: Off-policy Estimator Variance}

In this appendix, we report additional results from the experimental section (Section 4 from the main paper), answering RQ4. Specifically, we look the the empirical variance of various offline estimators for the task of off-policy evaluation. The mean squared error (MSE) of different offline estimators are reported in Figure~\ref{fig:ope_mse}. In this appendix, we report the empirical variance of various offline estimators in Figure~\ref{fig:ope_sample_var}. 

From the figure, it is clear that our proposed $\beta$-IPS estimator with estimator variance minimizing $\beta$ value (Eq.~\ref{eq:optimal_offpolicy_beta}) results in the lowest empirical variance in most of the cases. It is interesting to note that when the logged data is limited ($N < 10^3$), sometimes the SNIPS estimator has lower estimator variance. We suspect that the reason could be a bias in the estimate of the variance-optimal $\beta$ estimate (Eq.~\ref{eq:optimal_offpolicy_beta}), when the dataset size is small, given that it is a ratio estimate of expectations. For practical settings, i.e., when $N > 10^3$, the proposed estimator $\beta$-IPS results in a minimum sample variance, thereby empirically validating the effectiveness of our proposed $\beta$-IPS estimator for the task of OPE. 

\section*{Acknowledgements}

The authors would like to thank Adith Swaminathan, Thorsten Joachims, Ben London, and Philipp Hager for valuable discussions and feedback on early drafts of this manuscript.

This research was supported by Huawei Finland, 
the Dutch Research Council (NWO), under project numbers VI.Veni.222.269, 024.004.022, NWA.1389.20.\-183, and KICH3.LTP.20.006, 
and the European Union's Horizon Europe program under grant agreement No 101070212.
All content represents the opinion of the authors, which is not necessarily shared or endorsed by their respective employers and/or sponsors.

\clearpage
\balance
\bibliographystyle{ACM-Reference-Format}
\bibliography{bibliography}

\clearpage


\end{document}